
\documentclass[final]{cvpr}

\usepackage{times}
\usepackage{epsfig}
\usepackage{graphicx}
\usepackage{amsmath}
\usepackage{amssymb}


\usepackage[pagebackref=true,breaklinks=true,colorlinks,bookmarks=false]{hyperref}
\usepackage{array}



\usepackage{graphicx}
\usepackage{amsmath}
\usepackage{algorithm}
\usepackage[noend]{algpseudocode}
\usepackage{wrapfig}
\usepackage{amsthm}
\usepackage{amssymb}
\usepackage{color}
\usepackage{xcolor}
\usepackage{array}
\usepackage{mathabx}

\newtheorem{theorem}{Theorem}

\newtheorem{assumption}{Assumption}
\newtheorem{remark}{Remark}

\theoremstyle{definition}

\newcommand{\X}{\mathcal{X}}






\definecolor{light-gray}{gray}{0.95}

\begin{document}

\title{MeanShift++: Extremely Fast Mode-Seeking With Applications to Segmentation and Object Tracking}

\author{Jennifer Jang\\
Waymo\\
{\tt\small jangj@waymo.com}
\and
Heinrich Jiang\\
Google Research\\
{\tt\small heinrichj@google.com}
}

\maketitle

\begin{abstract}
   MeanShift is a popular mode-seeking clustering algorithm used in a wide range of applications in machine learning. However, it is known to be prohibitively slow, with quadratic runtime per iteration. We propose MeanShift++, an extremely fast mode-seeking algorithm based on MeanShift that uses a grid-based approach to speed up the mean shift step, replacing the computationally expensive neighbors search with a density-weighted mean of adjacent grid cells. In addition, we show that this grid-based technique for density estimation comes with theoretical guarantees. The runtime is linear in the number of points and exponential in dimension, which makes MeanShift++ ideal on low-dimensional applications such as image segmentation and object tracking. We provide extensive experimental analysis showing that MeanShift++ can be more than 10,000x faster than MeanShift with competitive clustering results on benchmark datasets and nearly identical image segmentations as MeanShift. Finally, we show promising results for object tracking.
\end{abstract}

\section{Introduction}

MeanShift \cite{cheng1995mean,comaniciu1999mean,fukunaga1975estimation} is a classical mode-seeking clustering algorithm that has a wide range of applications across machine learning and computer vision. Recent applications within computer vision include object tracking \cite{ning2012robust,vojir2014robust,leichter2010mean}, unsupervised image segmentation \cite{tao2007color,carreira2006acceleration,zhou2013mean}, video segmentation \cite{paris2007topological,dementhon2002spatio,paris2008edge}, image restoration \cite{arjomand2017deep,bigdeli2017image}, edge-preserving smoothing \cite{paris2008edge,barash2004common,buschenfeld2012edge}, point clouds \cite{lee2009shoreline,vosselman2013point,yue2018new}, and remote sensing \cite{ming2012semivariogram,ponti2012segmentation,chen2018airborne,michel2014stable}. More broadly in machine learning, MeanShift has been used for semi-supervised clustering \cite{anand2013semi,tuzel2009kernel}, manifold denoising \cite{xiang2016statistical,wang2010manifold}, matrix completion \cite{wang2011denoising,choudhary2016active}, anomaly detection \cite{aydin2013robust,yu2013scalable,tsai2010mean}, as well as numerous problems in medical imaging \cite{bai2013novel,tek2001vessel,zhou2011gradient,mayer2009adaptive,tek2010method,nguyen2012clustering,zhou2009anisotropic,zhou2014semi}, wireless sensor networks \cite{yu2020mean,zhou2009mean,xie2014k,sapre2018moth,wu2011video,qu2018incorporating}, and robotics \cite{kroemer2009active,lakaemper2009simultaneous,hu2013design,kato2005optimizing,yang2014robotic,cha2011mahru}.

Given a set of examples, MeanShift proceeds in iterations, where in each iteration, each point is moved to the average of the points within a neighborhood ball centered at that point. The radius of the ball is a hyperparameter, often referred to as the {\it bandwidth} or {\it window size}. All initial examples that converge to the same point are clustered together and the points of convergence are estimates of the modes or local maximas of the probability density function. It has been shown that MeanShift implicitly performs a gradient ascent on the kernel density estimate of the examples \cite{arias2016estimation}. MeanShift thus serves two purposes: mode-seeking and clustering.

MeanShift is often an attractive choice because it is non-parametric: unlike popular objective-based clustering algorithms such as $k$-means \cite{arthur2006k,kanungo2002efficient} and spectral clustering \cite{ng2002spectral,von2007tutorial}, it does not need to make many assumptions on the data, and the number of clusters is found automatically by the algorithm rather than a hyperparameter that needs to be set. In other words, MeanShift can adapt to general probability distributions. However, one of the main drawbacks of this procedure is its computational complexity: each iteration requires $O(n^2)$ computations. This is because for each example, calculating the window around the example is linear time in the worst case.

In this paper, we propose MeanShift++, a simple but effective procedure which first partitions the input space into a grid. Then, at each iteration, each point is assigned to its appropriate grid cell. We then approximate any point's window with the average point in its and its neighboring grid cells. Each iteration in this procedure runs in linear time to the number of data points, with the cost of being exponential to the dimension of the feature space (since the size of the grid is exponential in dimension). Such a trade-off is ideal in settings with a large number of data points but low dimensionality, which is often the case in computer vision applications. With the growing size of modern datasets and increasing resolution of data collected by sensors and cameras, it is becoming ever more urgent to have fast versions of classical techniques.

Our contributions are as follows:
\begin{itemize}
    \item We propose MeanShift++, a new mode-seeking procedure based on MeanShift that runs in $O(n\cdot 3^d)$ per iteration vs $O(n^2\cdot d)$ for MeanShift. MeanShift++ has no additional hyperparameters over MeanShift.
    \item We show that MeanShift++'s grid-based approximation attains near minimax optimal statistical consistency guarantees at approximating the true density.
    \item An extensive empirical analysis shows that MeanShift++ performs at least as well as MeanShift for clustering while being significantly faster.
    \item Image segmentation results show that MeanShift++ delivers almost identical segmentations as MeanShift while being as much as 10,000x faster. 
    \item Image segmentation experiments on the Berkeley Segmentation Dataset Benchmark (BSDS500) found that MeanShift++ performed on par or better than baselines despite being faster than most (and faster than MeanShift by 1,000x).
    \item We present a new object tracking algorithm based on MeanShift++ that can adapt to gradual color distributions and scene changes--something most MeanShift-based approaches cannot do due to the computational cost.
\end{itemize}

\section{Related Works}

Since MeanShift is a very popular procedure, there have been a number of approaches to speed up the algorithm and other mode-seeking based clustering algorithms in general. 

Yang et al. (2003) \cite{yang2003improved} propose a speedup of MeanShift by using a fast gauss transform to efficiently compute the kernel density estimator, reducing the computational complexity down to linear per iteration; however, they found the fast gauss transform to be impractical for any dimension higher than $3$. Yang et al. (2005) \cite{yang2005efficient} then applies this technique to a modified similarity function specifically for color histograms and show its effectiveness on frame-tracking in image sequences. Elgamma (2003) \cite{elgammal2003efficient} also leverage fast gauss transform for color modeling and tracking.

Then there are other computer vision application specific speedup methods for MeanShift. Yin et al. (2011) \cite{yin2011fast} leverage frame-differences to speed up MeanShift in the specific application of target tracking. Carreira-Perpinan (2006) \cite{carreira2006acceleration} shows that a spatial discretization strategy can accelerate Gaussian MeanShift image segmentation by one to two orders of magnitude while attaining almost the same segmentation. Carreira-Perpinan \cite{carreira2006fast} also provides similar results for general Gaussian MeanShift; however, in this paper, we show that our method can achieve a far better improvement compared to MeanShift.

Another set of approaches leverage space-partitioning data structures in order to speed up the density estimation calculations. Wang et al. (2007) \cite{wang2007fast} propose using a dual-tree to obtain a faster approximation of MeanShift with provable accuracy guarantees. Xiao et al. (2010) \cite{xiao2010efficient} propose a heuristic to make the computations more efficient by approximating MeanShift using a greatly reduced feature space via applying an adaptive Gaussian KD-Tree.

Freedman et al. (2009)  \cite{freedman2009fast} propose speeding up MeanShift by randomly sampling data points when computing the kernel density estimates. This approach only reduces the runtime by small orders and does not address the underlying quadratic runtime issue unless only a small number of samples are used, but this leads to high error in the density estimates. Our method is both linear runtime and utilizes all of the data points to construct an optimal density estimator.

Vedaldi and Soatto (2008) \cite{vedaldi2008quick} propose a procedure called QuickShift, which is modification of MeanShift in which the trajectories of the points are restricted to the original examples. The same procedure was proposed later by Rodriguez and Laio (2014) \cite{rodriguez2014clustering}. The procedure comes with theoretical guarantees \cite{jiang2017consistency}, and GPU-based speedups have been proposed for the algorithm \cite{fulkerson2010really}. We will show later in the experimental results that QuickShift is indeed much faster than MeanShift, but MeanShift++ is still orders of magnitude faster than QuickShift.
\algdef{SE}[DOWHILE]{Do}{doWhile}{\algorithmicdo}[1]{\algorithmicwhile\ #1}%

\section{Algorithm}

We first introduce MeanShift in Algorithm~\ref{alg:meanshift} to cluster data points $X_{[n]} := \{x_1,..,x_n\}$. The most popular version uses a unit flat kernel (i.e. $K(x) := 1[\lVert x\rVert \le 1]$ \cite{cheng1995mean}), but other kernels can be used as well including the Gaussian kernel \cite{carreira2007gaussian}. At each iteration, it moves points to its kernel-weighted mean w.r.t. the last iteration's points until convergence. This computation costs $O(n^2\cdot d)$ time per iteration, even if space-partitioning data structures are used to speed up the search to find the $h$-radius neighborhood in the flat kernel case \cite{xiao2010efficient}.

\begin{algorithm}[H]
\caption{MeanShift \cite{cheng1995mean,comaniciu1999mean,fukunaga1975estimation}}
\label{alg:meanshift}
\begin{algorithmic}[H]
  \State Inputs: bandwidth $h$, tolerance $\eta$, kernel $K$, $X_{[n]}$.
  \State Initialize $y_{0, i} := x_i$ for $i \in [n]$, $t = 1$.
   \Do
  \State For $i \in [n]$: \begin{align*}
      y_{t,i} \leftarrow \frac{\sum_{j \in [n]} K\left(\frac{\lVert y_{t-1, i} - y_{t-1, j} \rVert}{h}\right) y_{t-1, j} }{\sum_{j \in [n]} K\left(\frac{\lVert y_{t-1, i} - y_{t-1, j} \rVert}{h}\right) }.
  \end{align*} 
  \State $t \leftarrow t + 1$.
   \doWhile{$\sum_{i=1}^n \lVert y_{t, i} - y_{t-1, i} \rVert \ge \eta$.}
   \State \Return $\{y_{t,1},...,y_{t, n}\}$.
\end{algorithmic}
\end{algorithm}

We now introduce MeanShift++ (Algorithm~\ref{alg:meanshiftpp}). In each iteration, it first performs a preprocessing step of parititioning the input data points into appropriate grid cells, which are hypercubes of side length $h$. A graphical comparison with MeanShift is shown in Figure~\ref{fig:example}. In practice, we keep two hash tables $\mathcal{C}$ (to store the count in each cell) and $\mathcal{S}$ (to store the sum in each cell). $\mathcal{C}$ and $\mathcal{S}$ are defined as initially empty mappings from lattice points in $D$-dimension (corresponding to grid cells) to non-negative integers and $\mathbb{R}^D$ respectively. We show how they are updated via a single pass through the dataset. 

To compute which grid cell an example belongs to, we first divide each entry by $h$ and then take the element-wise floor function, which gives a $d$-dimensional integer vector index of the grids. Then, we can simply use the preprocessed information in the point's grid cell and neighboring grid cells to compute the shifted point. As a result, each point is moved to the average of all the points within its cell and neighboring cells. This grid-based approach is a much faster approximation of the density at low dimensions. The runtime of MeanShift++ grows linearly with $n$, while MeanShift is quadratic (Figure~\ref{fig:gaussians}).

\begin{algorithm}[H]
\caption{MeanShift++}
\label{alg:meanshiftpp}
\begin{algorithmic}[H]
  \State Inputs: bandwidth $h$, tolerance $\eta$, $X_{[n]}$.
  \State Initialize $y_{0, i} := x_i$ for $i \in [n]$, $t = 1$.
   \Do
  \State Initialize empty hash tables $\mathcal{C}: \mathcal{Z}^d \rightarrow \mathcal{Z}_{\ge 0}$ (stores cell count), $\mathcal{S}: \mathcal{Z}^d \rightarrow \mathbb{R}^d$ (stores cell sum).
  \State $\mathcal{C}(\lfloor y_{t-1, i} / h \rfloor) \leftarrow \mathcal{C}(\lfloor y_{t-1, i} / h \rfloor) + 1$ for $i \in [n]$.
  \State $\mathcal{S}(\lfloor y_{t-1, i} / h \rfloor) \leftarrow \mathcal{S}(\lfloor y_{t-1, i} / h \rfloor) + y_{t-1, i}$ for $i \in [n]$.
  \State Next, for all $i \in [n]$: \begin{align*}
      y_{t,i} \leftarrow \frac{\sum_{v \in \{-1, 0, 1\}^d} \mathcal{S}(\lfloor y_{t-1, i} / h \rfloor + v)  }{\sum_{v \in \{-1, 0, 1\}^d} \mathcal{C}(\lfloor y_{t-1, i} / h \rfloor + v) }.
  \end{align*} 
  \State $t \leftarrow t + 1$.
   \doWhile{$\sum_{i=1}^n \lVert y_{t, i} - y_{t-1, i} \rVert \ge \eta$.}
   \State \Return $\{y_{t,1},...,y_{t, n}\}$.
\end{algorithmic}
\end{algorithm}

\begin{figure}
\begin{center}
\includegraphics[width=\linewidth]{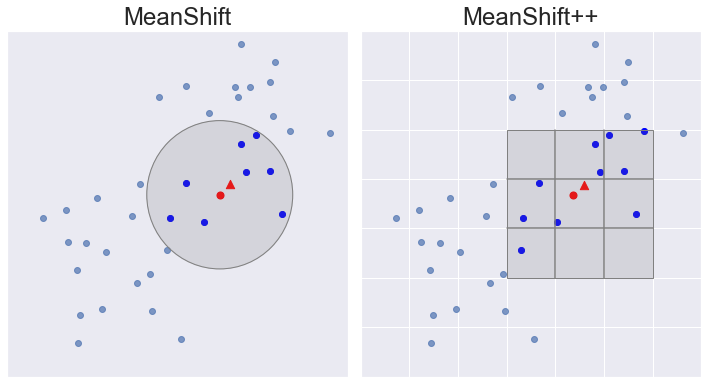}
\end{center}
   \caption{\label{fig:example}\textit{2D example illustrating the difference between MeanShift and MeanShift++.} The red circle is the point we want to shift to the mean of its neighbors. It takes $O(n)$ for MeanShift to find the neighbors of a single point versus $O(3^d)$ for MeanShift++ using grid cells. The location of the new point is indicated by the red triangle.}
\end{figure}
\begin{figure}
\begin{center}
\includegraphics[width=\linewidth]{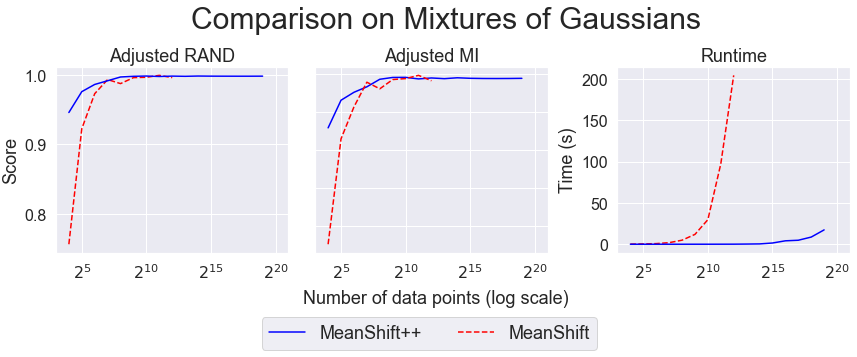}
\end{center}
   \caption{\label{fig:gaussians}\textit{Comparison of MeanShift++ and MeanShift on 2D mixtures of Gaussians.} The runtime of MeanShift is quadratic in the number of data points and quickly becomes infeasible to run, whereas the runtime of MeanShift++ grows linearly with $n$ (shown here in log space). The clustering performances are comparable.}
\end{figure}
\section{Theory}
In this section, we give guarantees on our grid-based approach. Suppose there is some underlying distribution $\mathcal{P}$ with corresponding density function $p : \mathbb{R}^d \rightarrow \mathbb{R}_{\ge 0}$ from which our data points $X_{[n]} = \{x_1,...,x_n\}$ are drawn i.i.d. We show guarantees on the density estimator based on the grid cell counts.

We need the following regularity assumptions on the density function. The first ensures that the density function has compact support with smooth boundaries and is lower bounded by some positive quantity, and the other ensures that the density function has smoothness. These are standard assumptions in analyses on density estimation e.g. \cite{gine2002rates,jiang2017uniform,chen2017tutorial,singh2009adaptive}.
\begin{assumption}\label{assumption1}
$p$ has compact support $\mathcal{X} \in \mathbb{R}^d$ and there exists $\lambda_0, r_0, C_0 > 0$ such that $p(x) \ge \lambda_0$ for all $x \in \mathcal{X}$ and $\text{Vol}(B(x, r) \cap \mathcal{X}) \ge C_0 \cdot \text{Vol}(B(x, r))$ for all $x \in \mathcal{X}$ and $0 < r \le r_0$, where $B(x, r) := \{x' \in \mathbb{R}^d: |x-x'| \le r\}$.
\end{assumption}
\begin{assumption}\label{assumption2}
$p$ is $\alpha$-Hölder continuous for some $0 < \alpha \le 1$: i.e. there exists $C_\alpha > 0$ such that $|p(x) - p(x')| \le C_\alpha \cdot |x - x'|^\alpha$ for all $x, x' \in \mathbb{R}^d$.
\end{assumption}

We now give the result, which says that for $h$ sufficiently small depending on $p$ (if $h$ is too large, then the grid is too coarse to learn a statistically consistent density estimator), and $n$ sufficiently large, there will be a high probability finite-sample uniform bound on the difference between the density estimator and the true density. The proof can be found in the Appendix.
\begin{theorem}\label{theorem}
Suppose Assumption~\ref{assumption1} and~\ref{assumption2} hold. Then there exists constants $C, C_{1} > 0$ depending on $p$ such that the following holds.
Let $0 < \delta < 1$, $0 < h < \text{min}\{\left(\frac{\lambda_0}{2\cdot C_\alpha}\right)^{1/\alpha}, r_0\}$, $nh^d \ge C_1$. Let $\mathcal{G}_h$ be a partitioning of $\mathbb{R}^d$ into grid cells of edge-length $h$ and for $x \in \mathbb{R}^d$. Let $G(x)$ denote the cell in $\mathcal{G}_h$ that $x$ belongs to.  Then, define the corresponding density estimator $\widehat{p}_h$ as:
\begin{align*}
    \widehat{p}_h(x) := \frac{|X_{[n]} \cap G(x)|}{n\cdot h^d}.
\end{align*}
Then, with probability at least $1 - \delta$:
\begin{align*}
    \sup_{x \in \mathbb{R}^d} |\widehat{p}_h(x)  - p(x)| \le C\cdot \left( h^\alpha + \frac{\sqrt{\log(1/(h\delta)}}{\sqrt{n\cdot h^d}} \right).
\end{align*}
\end{theorem}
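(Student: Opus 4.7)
The plan is a classical bias/variance decomposition done cell-by-cell, followed by a union bound over the $O(h^{-d})$ cells of $\mathcal{G}_h$ that intersect the compact support $\mathcal{X}$. Observe that $\widehat{p}_h$ is constant on each cell by construction, so it suffices to bound $|\widehat{p}_h(x) - p(x)|$ at one representative $x$ per cell; for cells whose $G(x)$ is disjoint from $\mathcal{X}$, both $\widehat{p}_h(x)$ and $p(x)$ are zero and there is nothing to show.

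For a fixed $x$ whose cell $G := G(x)$ meets $\mathcal{X}$, let $p_G := \int_G p(y)\,dy$, so that $\mathbb{E}[\widehat{p}_h(x)] = p_G / h^d$. Hölder continuity (Assumption~\ref{assumption2}) gives $|p(y) - p(x)| \le C_\alpha (\sqrt{d}\,h)^\alpha$ for every $y \in G$, which immediately yields the bias bound $|\mathbb{E}[\widehat{p}_h(x)] - p(x)| \le C_\alpha d^{\alpha/2}\, h^\alpha$, producing the $h^\alpha$ term. For the fluctuation around the mean, $n h^d \widehat{p}_h(x) = |X_{[n]} \cap G|$ is $\mathrm{Binomial}(n, p_G)$ with $p_G \le \|p\|_\infty \cdot h^d$ (and $\|p\|_\infty$ is finite by Hölder continuity on compact support). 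Bernstein's inequality then yields, with probability at least $1-\delta'$,
\[
|\widehat{p}_h(x) - \mathbb{E}[\widehat{p}_h(x)]| \;\le\; C'\sqrt{\frac{\log(1/\delta')}{n h^d}} \;+\; C'\,\frac{\log(1/\delta')}{n h^d},
\]
where the second term is absorbed into the first under the hypothesis $nh^d \ge C_1$ by choosing $C_1$ appropriately large. The bandwidth condition $h < (\lambda_0/(2C_\alpha))^{1/\alpha}$ is used here to guarantee $p(y) \ge \lambda_0/2$ on $G \cap \mathcal{X}$ and, together with the volume regularity in Assumption~\ref{assumption1}, to ensure $p_G \gtrsim h^d$ so that Bernstein's variance term hits the right scale uniformly.

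Taking a union bound over the $N = O(h^{-d})$ cells meeting $\mathcal{X}$ with $\delta' := \delta/N$ produces $\log(1/\delta') \le d\log(1/h) + \log(1/\delta) \lesssim \log(1/(h\delta))$, which combined with the bias bound yields the advertised rate $C\bigl(h^\alpha + \sqrt{\log(1/(h\delta))/(nh^d)}\bigr)$. The main obstacle, as I see it, is the careful treatment of boundary cells that straddle $\partial \mathcal{X}$: because $\widehat{p}_h$ divides by the full cell volume $h^d$ rather than by $|G \cap \mathcal{X}|$, a naive bias estimate would lose an $O(1)$ factor proportional to $p(x)\cdot|G\setminus\mathcal{X}|/h^d$. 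The volume regularity $\mathrm{Vol}(B(x,r)\cap\mathcal{X}) \ge C_0\,\mathrm{Vol}(B(x,r))$ from Assumption~\ref{assumption1} is precisely what salvages this: it forces $|G\cap\mathcal{X}|/h^d$ to be bounded above and below by constants, so the Hölder calculation still delivers an $O(h^\alpha)$ bias at boundary cells and the Bernstein step still sees enough mass to concentrate at the advertised rate.
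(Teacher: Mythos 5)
Your proposal follows essentially the same route as the paper's appendix proof: a per-cell bias bound of order $h^\alpha$ from H\"older continuity, a Bernstein/Chernoff bound on the binomial cell count with variance of order $h^d$, a union bound over the $O(h^{-d})$ cells meeting $\mathcal{X}$ (which is where the $\log(1/(h\delta))$ factor comes from), and a final triangle inequality.

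The one place you diverge is the treatment of boundary cells, and the rescue you propose does not work as stated: knowing only that $\mathrm{Vol}(G\cap\mathcal{X})/h^d \ge c$ for some constant $c<1$ would leave a bias of order $(1-c)\,p(x)$, which is $\Theta(1)$ since $p \ge \lambda_0$ on $\mathcal{X}$, not $O(h^\alpha)$. The point is moot under the paper's stated hypotheses, because Assumption~\ref{assumption2} asserts H\"older continuity on all of $\mathbb{R}^d$, so $\left|\mathcal{P}(G(x))/h^d - p(x)\right| \le h^{-d}\int_{G(x)}|p(y)-p(x)|\,dy \le C_\alpha (\sqrt{d}\,h)^\alpha$ holds for every cell, boundary or interior --- which is exactly how the paper dispatches the bias term, with no volume argument. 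The volume-regularity condition of Assumption~\ref{assumption1}, together with the bandwidth restriction $h < (\lambda_0/(2C_\alpha))^{1/\alpha}$, is used in the paper for a different purpose: to lower-bound the cell mass, $\mathcal{P}(G) \ge \tfrac{1}{2}C'\lambda_0 h^d$, so that (combined with $nh^d \ge C_1$) the concentration step is applied in the regime where the sub-Gaussian tail with variance proxy $O(h^d)$ is valid --- the same role your absorption of Bernstein's linear term plays. With that correction, your argument matches the paper's.
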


\begin{remark}
In the above result, choosing $h \approx n^{-1/(2\alpha+d)}$ optimizes the convergence rate to $\tilde{O}(n^{-\alpha/(2\alpha+d)})$, which is the minimax optimal convergence up to logarithmic factors for the density estimation problem as established by Tsybakov \cite{tsybakov1997nonparametric,tsybakov2008introduction}.
\end{remark}
In other words, the grid-based approach statistically performs at least as well as any estimator of the density function, including the density estimator used by MeanShift. It is worth noting that while our results only provide results for the density estimation portion of MeanShift++ (i.e. the grid-cell binning technique), we prove the near-minimax optimality of this estimation. This implies that the information contained in the density estimation portion serves as an approximately sufficient statistic for the rest of the procedure, which behaves similarly to MeanShift, which operates on another, also nearly-optimal density estimator. Thus, existing analyses of MeanShift e.g. \cite{arias2016estimation,chen2015convergence,xiang2005convergence,li2007note,ghassabeh2015sufficient,ghassabeh2013convergence,subbarao2009nonlinear} can be adapted here; however, it is known that MeanShift is very difficult to analyze \cite{dasgupta2014optimal} and a complete analysis is beyond the scope of this paper.

\begin{table}
\begin{tabular}{ |p{0.1cm}||p{3.5cm}|p{1.5cm}|p{0.3cm}|p{0.3cm}| }
        \hline
        & \textbf{Dataset} & $n$ & $d$ & $c$ \\
        \hline
        a & Phone Accelerometer & 13,062,475 & 3 & 7 \\
        \hline
        b & Phone Gyroscope & 13,932,632 & 3 & 7 \\
        \hline
        c & Watch Accelerometer & 3,540,962 & 3 & 7 \\
        \hline
        d & Watch Gyroscope & 3,205,431 & 3 & 7 \\
        \hline
        e & Still & 949,983 & 3 & 6 \\
        \hline
        f & Skin & 245,057 & 3 & 2 \\
        \hline
        g & Iris & 150 & 4 & 3 \\
        \hline
        h & Lupus & 87 & 3 & 2 \\
        \hline
        i & Confidence & 72 & 3 & 2 \\
        \hline
        j & Geyser & 22 & 2 & 2 \\
        \hline
        k & Balance Scale & 625 & 4 & 3 \\
        \hline
        l & Vinnie & 380 & 2 & 2 \\
        \hline
        m & Sleep Data & 1,024 & 2 & 2 \\
        \hline
        n & Transplant & 131 & 3 & 2 \\
        \hline
        o & Slope & 44 & 3 & 2 \\
        \hline
        p & PRNN & 250 & 2 & 2 \\
        \hline
        q & Wall Robot & 5,456 & 4 & 4 \\
        \hline
        r & User Knowledge & 403 & 5 & 5 \\
        \hline
    \end{tabular}
    \vspace{0.3cm}
    \caption{\label{fig:datasetsummary_small}\textit{Summary of datasets used.} Includes dataset size ($n$), number of features ($d$), and number of clusters ($c$).}
\end{table}

\begin{table}
\footnotesize
\begin{tabular}{ |p{2.2cm}||p{0.95cm}|p{1.0cm}||p{0.95cm}|p{1.0cm}| }
        \cline{2-5}
        \multicolumn{1}{c}{} & \multicolumn{2}{|c||}{ARI} & \multicolumn{2}{c|}{AMI}  \\
        \cline{2-5}
        \multicolumn{1}{c}{} & \multicolumn{1}{|c|}{MS++} & \multicolumn{1}{c||}{MS} & \multicolumn{1}{c|}{MS++} & \multicolumn{1}{c|}{MS} \\
        \hline
        a) Phone & \textbf{0.0897} & \textit{DNF} & \textbf{0.1959} & \textit{DNF} \\
        Accelerometer & 29m 59s & $>$24h & 49m 19s & $>$24h \\
        \hline
        b) Phone & \textbf{0.2354} & \textit{DNF} & \textbf{0.1835} & \textit{DNF} \\
        Gyroscope & 1h 35m & $>$24h & 32m 32s & $>$24h \\
        \hline
        c) Watch & \textbf{0.0913} & \textit{DNF} & \textbf{0.2309} & \textit{DNF} \\
        Accelerometer & 17m 52s & $>$24h & 43m 32s & $>$24h \\
        \hline
        d) Watch & \textbf{0.1595} & \textit{DNF} & \textbf{0.1336} & \textit{DNF} \\
        Gyroscope & 24m 45s & $>$24h & 9m 3s & $>$24h \\
        \hline
        e) Still & \textbf{0.7900} & \textit{DNF} & \textbf{0.8551} & \textit{DNF} \\
        & 13.12s & $>$24h & 8.58s & $>$24h \\
        \hline
        f) Skin & \textbf{0.3270} & 0.3255 & \textbf{0.4240} &0.3975 \\
        & 16.44s & 3h 41m & 13.07s & 3h 41m \\
        \hline
        g) Iris & 0.5681 & \textbf{0.6832} & \textbf{0.7316} & 0.6970 \\
        & $<$0.01s & 6.35s & $<$0.01s & 2.36s \\
        \hline
        h) Lupus & \textbf{0.1827} & 0.1399 & \textbf{0.2134} & 0.2042 \\
        & $<$0.01s & 3.82s & $<$0.01s & 3.82s \\
        \hline
        i) Confidence & \textbf{0.2080} & 0.2059 & \textbf{0.2455} & 0.2215 \\
        & 0.02s & 0.70s & $<$0.01s & 0.98s \\
        \hline
        j) Geyser & \textbf{0.1229} & 0.0886 & \textbf{0.2409} & 0.2198 \\
        & $<$0.01s & 2.88s & $<$0.01s & 2.88s \\
        \hline
        k) Balance Scale & \textbf{0.0883} & 0.0836 & \textbf{0.2268} & 0.2166 \\
        & 0.09s & 16.02s & 0.09s & 16.02s \\
        \hline
        l) Vinnie & \textbf{0.4594} & 0.4383 & 0.3666 & \textbf{0.3671} \\
        & 0.01s & 16.85s & 0.01s & 16.85s \\
        \hline
        m) Sleep Data & 0.1181 & \textbf{0.1242} & \textbf{0.1028} & 0.0998 \\
        & 0.02s & 45.25s & 0.02s & 45.25s \\
        \hline
        n) Transplant & \textbf{0.7687} & 0.6328 & \textbf{0.7175} & 0.7018 \\
        & $<$0.01s & 4.22s & $<$0.01s & 4.22s \\
        \hline
        o) Slope & \textbf{0.2777} & 0.2715 & \textbf{0.3877} & 0.3630 \\
        & $<$0.01s & 0.43s & $<$0.01s & 0.43s \\
        \hline
        p) PRNN & \textbf{0.2093} & 0.1872 & \textbf{0.2912} & 0.2590 \\
        & 0.02s & 10.72s & $<$0.01s & 10.72s \\
        \hline
        q) Wall Robot & \textbf{0.1788} & 0.1706 & 0.3239 & \textbf{0.3246} \\
        & 0.69s & 4m37s & 0.88s & 2m30s \\
        \hline
        r) User Knowledge & \textbf{0.3398} & 0.2140 & \textbf{0.4086} & 0.3278 \\
        & 0.06s & 7.62s & 0.06s & 7.62s \\
        \hline
    \end{tabular}
    \vspace{0.3cm}
    \caption{\label{fig:datasetsummary_large}\textit{Summary of clustering performances.} MeanShift++'s and MeanShift's best results for 19 real-world datasets after tuning bandwidth. Datasets from the UCI Machine Learning Repository \cite{Dua:2019} and OpenML \cite{OpenML2013}. In cases where the original target variable is continuous, binarized versions of the datasets were used. These experiments were run on a local machine with a 1.2 GHz Intel Core M processor and 8 GB memory. MeanShift did not finish (DNF) within 24 hours for the top five largest datasets. ARI, AMI, and runtime are reported for each run, and the highest score obtained for that metric and dataset is bolded.}
\end{table}

\begin{figure*}
\begin{center}
\includegraphics[width=0.9\linewidth]{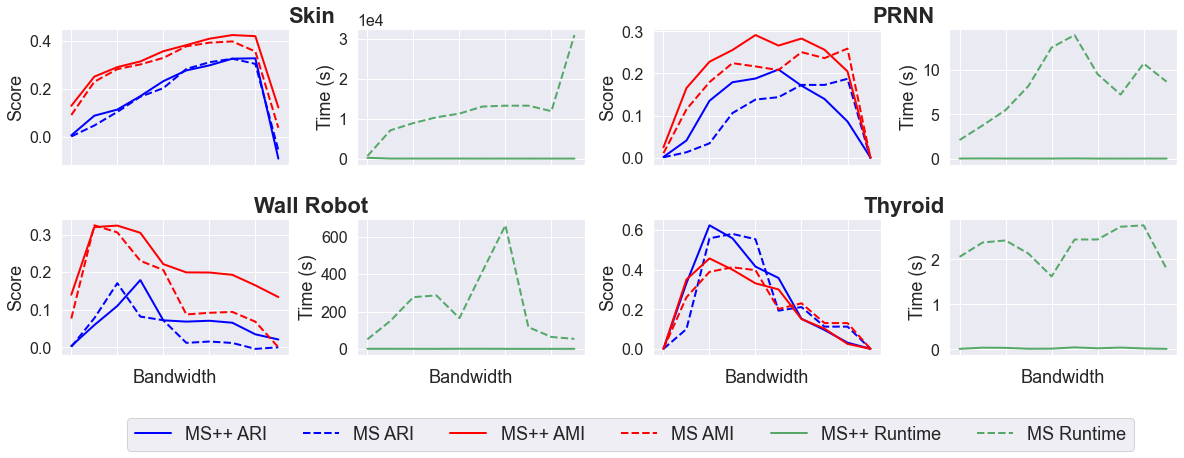}
\end{center}
   \caption{\label{fig:experiments}\textit{Comparison of MeanShift++ and MeanShift on four real-world datasets across a wide range of hyperparameter.} Datasets are shown here to illustrate how both algorithms were tuned over an appropriate range of bandwidth. Adjusted RAND index (ARI), adjusted mutual information score (AMI), and runtime are reported for each run. MeanShift++ consistently performs as well or better than MeanShift despite being up to 1000x faster. Additional experiments are shown in the Appendix.}
\end{figure*}

\begin{figure*}
\begin{center}
\includegraphics[width=\linewidth]{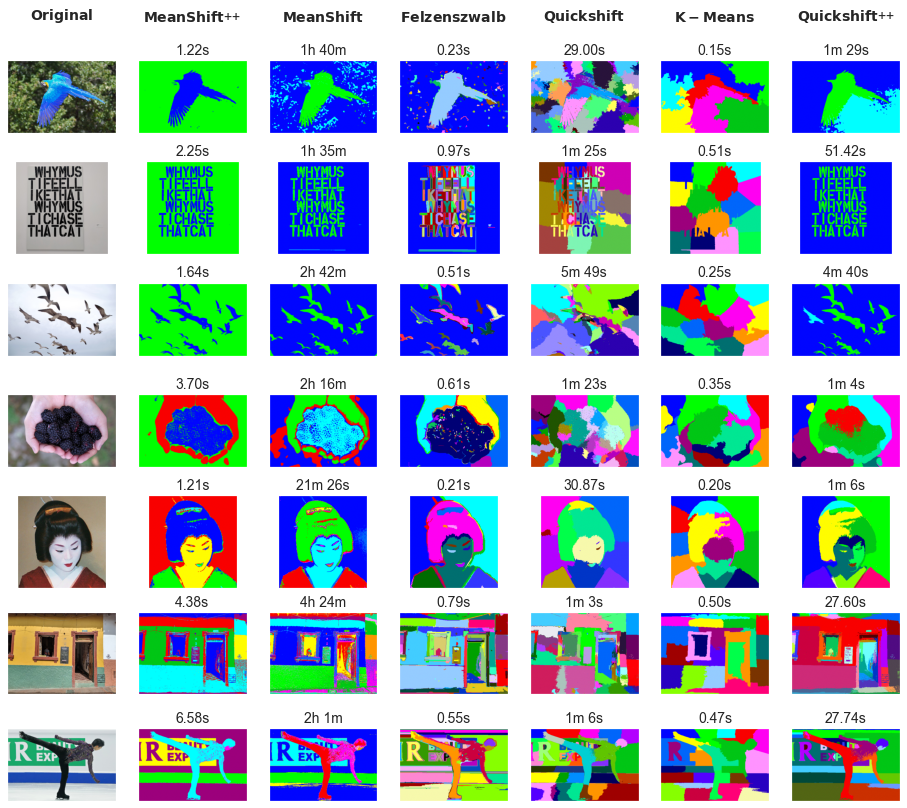}
\end{center}
   \caption{\label{fig:image_segmentation}\textit{Comparison of six image segmentation algorithms.} We show the results of MeanShift++, MeanShift, Quickshift++, and three other popular image segmentation algorithms from the Scikit-Image library\cite{van2014scikit}. MeanShift returns qualitatively good results on image segmentation but takes very long to run. MeanShift++ returns segmentations that are the most similar to MeanShift with an up to 10,000x speedup. We expect the speedup to be far greater for high resolution images--the images shown here are low resolution (under 200k pixels).}
\end{figure*}

\begin{figure*}
\begin{center}
\includegraphics[width=1.0\linewidth]{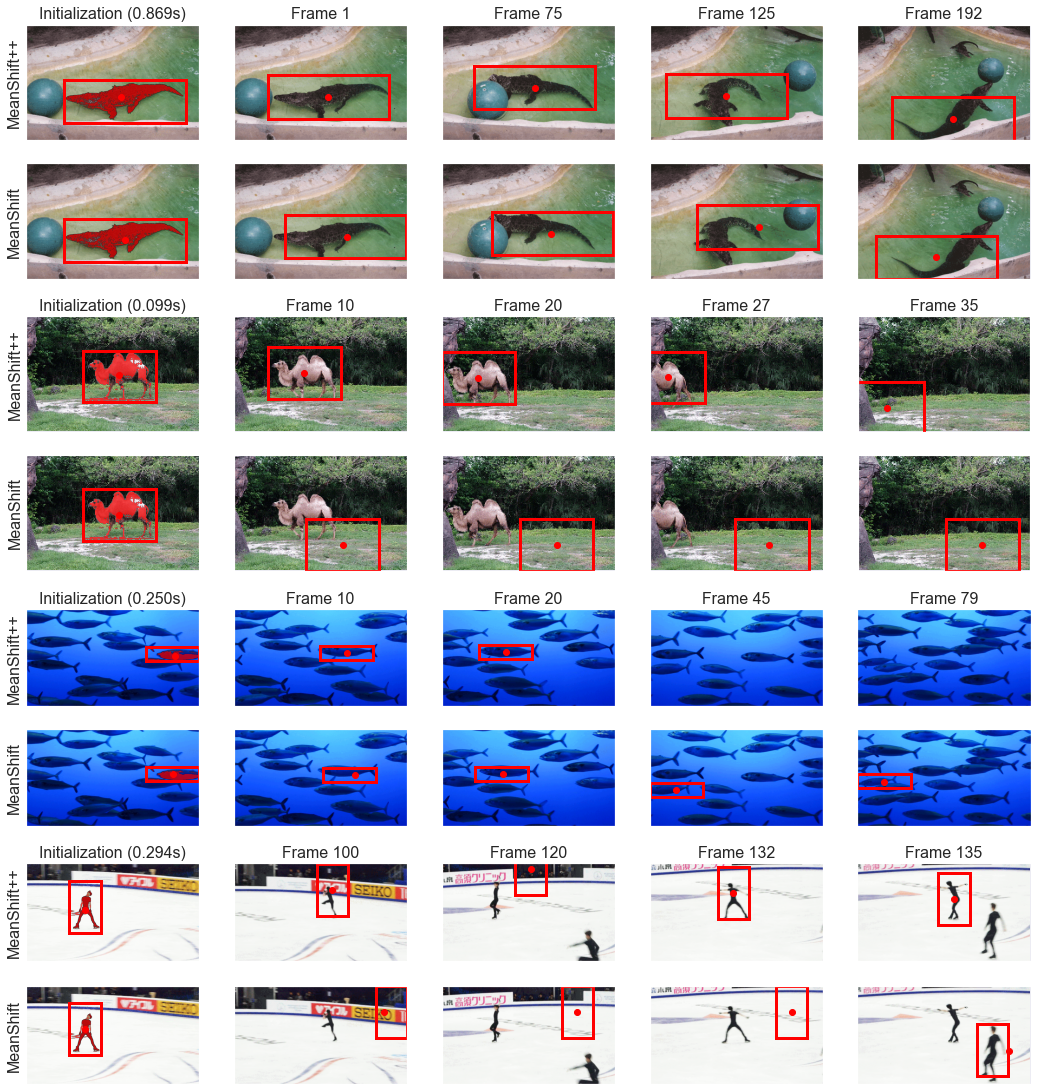}
\end{center}
   \caption{\label{fig:object_tracking}\textit{Comparison of MeanShift++ and MeanShift on object tracking.} Unlike MeanShift++, MeanShift is too slow to generate masks for real-time object tracking. In practice, the user manually provides a color range that they want to track, which is often incomplete, inaccurate, or biased. Here, we initialize both MeanShift++ and MeanShift with a mask from clustering results generated by MeanShift++ to save time. For MeanShift, we use OpenCV's \cite{bradski2008learning} implementation of color histograms to track the object in question. For MeanShift++, we naturally use the grid cells that are returned from the clustering step. We find that MeanShift is more likely to get distracted by backgrounds, foregrounds, and other objects in the scene. {\bf First scene}: MeanShift returns less accurate object centers and search windows. {\bf Second scene}: MeanShift fails to find the object altogether due to an abundance of similar colors in the frame that cannot be decoupled from the object of interest. {\bf Third scene}: MeanShift starts tracking similar objects nearby when the original objective moves out of frame. In contrast, MeanShift++ stops tracking when it finds the center of mass in the search window disappear. {\bf Fourth scene}: MeanShift loses the skater faster than MeanShift++ and fails to find him again (instead it starts to track another skater altogether).}
\end{figure*}

\begin{figure}
\begin{center}
\includegraphics[width=\linewidth]{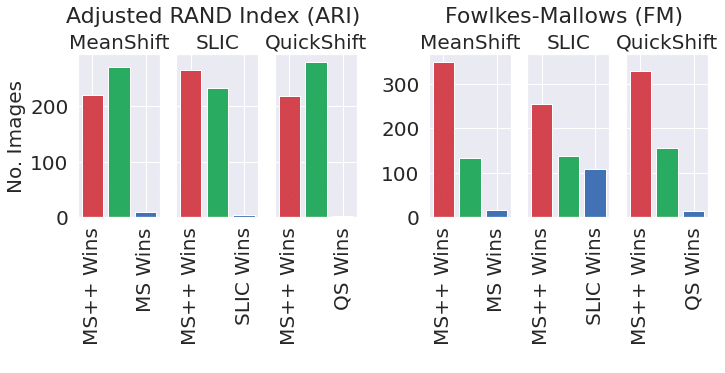}
\begin{tabular}{ |p{2.2cm}||p{3.8cm}| }
    \hline
    & Average Runtime ($\mu s$) \\
    \hline
    MeanShift++ & 2,675,100 \\
    \hline
    MeanShift & 1,765,462,893 \\
    \hline
    SLIC & 56,266 \\
    \hline
    Quickshift & 31,717,386 \\
    \hline
\end{tabular}
\end{center}
   \caption{\label{fig:bsds500}\textit{Comparison of MeanShift++, MeanShift, SLIC, and QuickShift on BSDS500 using the ARI and FM clustering metrics.} Performance metrics are averaged over 500 images. For each baseline, we plot MS++ wins, baseline wins, and ties (where the two algorithms score within 1\% of each other). MS++ performs on par or better compared to baseline algorithms. In order for MeanShift++ to finish running, we sample the images down by an order of 2. MeanShift++ is still around 1000x faster than MeanShift.}
\label{fig:short}
\end{figure}
\section{Experiments}
We compare Meanshift++ against MeanShift on various clustering tasks in Table~\ref{fig:datasetsummary_small}. These comparisons are made using the Scikit-Learn \cite{pedregosa2011scikit} implementation of MeanShift and our own implementation of MeanShift++ in Cython.

To measure the quality of a clustering result, we use the Adjusted Rand Index (ARI) \cite{hubert1985comparing} and the Adjusted Mutual Information (AMI) \cite{vinh2010information} scores, which compare the clustering with the partitioning induced by the labels of the data points, a popular way of comparing clustering performance \cite{jang2019dbscan++}. The benchmark datasets we use are labeled datasets, and we only cluster the features. 

As stated earlier, MeanShift++ is linear with respect to the number of data points and exponential in dimension. We thus show results on low-dimensional datasets. In Figure~\ref{fig:experiments}, we ran both algorithms on 19 benchmark datasets with $5$ or fewer dimensions, ranging from less than $100$ data points to millions of data points.

For the top five largest datasets, MeanShift failed to return a result for any setting of bandwidth despite running for more than 24 hours. MeanShift++ consistently outperformed MeanShift in both clustering quality and runtime for the rest of the datasets, as shown in Table~\ref{fig:datasetsummary_large}. We saw a significant speed reduction of over 100x on both small and large datasets, showing that MeanShift++ does not have significantly more overhead costs than MeanShift either. 

We also show in Figure~\ref{fig:experiments} the effect the bandwidth setting has on clustering performance and runtime for a few of the datasets to provide further insight into the stability of the procedures under the bandwidth hyperparameter.

We note that MeanShift++ outperforms MeanShift on many datasets, possibly due to a regularizing effect: by partitioning the space into grids and assigning every point in the same cell the same value instead of a unique value for each point, the gradient-ascent shifting step is more stable than in MeanShift. This regularization effect, combined with the option to tune the cell-length which essentially controls the amount of regularization, allows MeanShift++ to outperform Meanshift in some cases. 

However, these results are unlikely to generalize to higher dimensions. It is known that density-based procedures perform poorly in high dimensions due to the curse of dimensionality. Our theoretical results also show that rates become exponentially worse in higher dimension. 
\section{Image Segmentation}

We compare MeanShift++ to a number of baselines for unsupervised image segmentation in Figure~\ref{fig:image_segmentation}. We include Felzenszwalb \cite{felzenszwalb2004efficient}, QuickShift \cite{vedaldi2008quick}, and $k$-means, three popular image segmentation procedures from the Python Scikit-Image library \cite{van2014scikit}, as well as Quickshift++ \cite{jiang2018quickshift++}, a recent algorithm shown to be an improvement over Quickshift on image segmentation. We also include MeanShift, which often produces qualitatively better clusters than the other baselines, but runs for so much longer that it is impractical for high-resolution image segmentation. 

For image segmentation, we run each algorithm on a preprocessed image with each pixel represented in a 3D RGB color channel space, with the exception of Quickshift++, which takes $(r, g, b, x, y)$ color and spatial coordinates. MeanShift was run with both $(r, g, b)$ (shown in Figure~\ref{fig:image_segmentation}) and $(r, g, b, x, y)$ inputs, but we did not see a difference in segmentation quality or runtime. For each algorithm, the returned clusters are taken as the segments.

Our image segmentation experiments in Figure~\ref{fig:image_segmentation} show that MeanShift++ is able to produce segmentations that are nearly identical to that of MeanShift with an up to 10,000x speedup. We capped the sizes of our images at 187,500 pixels to allow MeanShift to finish running, so this speedup would surely be greater on even higher resolution images. 

Multiple attempts have been made to speed up MeanShift for image segmentation at the cost of quality, but MeanShift++ does not seem to trade off segmentation quality despite running in a sub-fraction of the time.

For a more quantitative comparison, we ran experiments using the Berkeley Segmentation Dataset Benchmark (BSDS500) of $500$ images with $6$ human-labeled segmentations each. We ran MeanShift++, MeanShift, SLIC, and QuickShift on each image and used the adjusted RAND index \cite{hubert1985comparing} (ARI) and Fowlkes-Mallows \cite{fowlkes1983method} (FM) scores to compare the clusters. Scores were averaged over the $6$ ground truth segmentations. We found that MeanShift++ performed on par or better than baselines despite being faster than MeanShift by 1,000x on average (Figure \ref{fig:bsds500}). \\
\section{Object Tracking}

The mode-seeking behavior of MeanShift makes it a good candidate for visual tracking. A basic implementation would take in a mask and/or search window of an object and build a histogram of the colors found in that object in RGB or HSV space. Afterwards, the histogram is normalized by the colors found in surrounding, non-target points into a probability distribution. At each step, the tracking algorithm would backproject each point in the window into a probability that it is part of the original object. The center of the window moves in $(x, y)$ space to the mode of the distribution within the window until convergence in the same way that MeanShift iteratively moves each point to the mean of its neighbors. Every frame thereafter would be initialized with the final window of the previous frame.

\begin{algorithm}
\caption{MeanShift++ for Tracking}
\label{alg:meanshiftpp_tracking}
\begin{algorithmic}[H]
  \State {\bf Inputs}: bandwidth $h$, tolerance $\eta$, initial window $W_0$, sequence of frames $X_0,X_1,...,X_T$.
  \State Define: $W \cap X$ as pixels in window $W$ for frame $X$.
  \State Run MeanShift++ on the pixels (in color space) in $W_0 \cap X_0$ and manually select the cluster(s) desired to track. Let the union of selected cluster(s) be $C$.
  \State $B \leftarrow \{\lfloor c / h \rfloor  : c \in C\}$.
  \For{$i=1,2,...,T$}
    \State Initialize $W_i \leftarrow W_{i-1}$.
    \Do
    \State $R_i := \{ x \in W_i \cap X_i : \lfloor x / h \rfloor \in B \}$.
    \State Move $W_i$ so that it's centered at the average $(x,y)$-position of points in $R_i$.
    \doWhile{$W_i$'s center converges with tolerance $\eta$.}
    \State Optionally update $B \leftarrow \{\lfloor x / h \rfloor  : x \in W_i \cap X_i\} \cap N(B)$, where $N(B)$ are all cells that are in $B$ or adjacent to one in $B$.
    \State {\bf emit} $W_i$ for frame $X_i$.
  \EndFor
\end{algorithmic}
\end{algorithm}

MeanShift++ can be used for object tracking in a similar, albeit more principled way (Algorithm~\ref{alg:meanshiftpp_tracking}). Instead of color histograms, which need to be computed and require extra hyperparameters to determine the size of bins, the grid cells generated by MeanShift++ during clustering are already precomputed and suitable for tracking: we can quickly compute which points fall into any bin belonging to the target cluster(s).

MeanShift is also too slow to generate masks needed for real-time tracking. It often requires the user to provide a precomputed mask or color range. Relying on user input is imperfect and subject to biases. MeanShift++ is fast enough to generate masks through real-time clustering. 

Off-the-shelf versions of MeanShift tracking rely on a histogram calculated from the original frame throughout the whole scene. This does not work well if the illumination in the scene changes, since the algorithm cannot make fast updates to the histogram \cite{freedman2005illumination,whoang2012object,phadke2013illumination,phadke2017mean}. MeanShift++ can adapt to changing color distributions by finding and adding neighboring grids of points to the histogram in linear time, making it more robust to gradual changes in lighting, color, and other artifacts of the data.

CamShift \cite{bradski1998computer,allen2004object} improves MeanShift tracking by adjusting window sizes as objects move closer or farther away and updating color histograms based on lighting changes, among other things. Future work may involve adapting these ideas to a MeanShift++-based tracking algorithm.

In Figure~\ref{fig:object_tracking}, we show the performance of MeanShift++ and MeanShift (from the Python OpenCV library \cite{bradski2008learning}) on object tracking in various scenes. In practice, we found that MeanShift tends to be easily misled by the surroundings, particularly when there are areas of similar color. Finally, MeanShift++ usually finds better windows than MeanShift.

\section{Conclusion}

We provided MeanShift++, a simple and principled approach to speed up the MeanShift algorithm in low-dimensional settings. We applied it to clustering, image segmentation, and object tracking, and show that MeanShift++ is competitive with MeanShift in low dimensions while being as much as 10,000x faster. This dramatic speedup makes MeanShift++ practical for modern computer vision applications. 

\clearpage
{\small
\bibliographystyle{ieee_fullname}
\bibliography{paper}
}
{
\appendix
\clearpage
\onecolumn

\section{Experiments}
\begin{figure}[H]
\begin{center}
\includegraphics[width=0.95\linewidth]{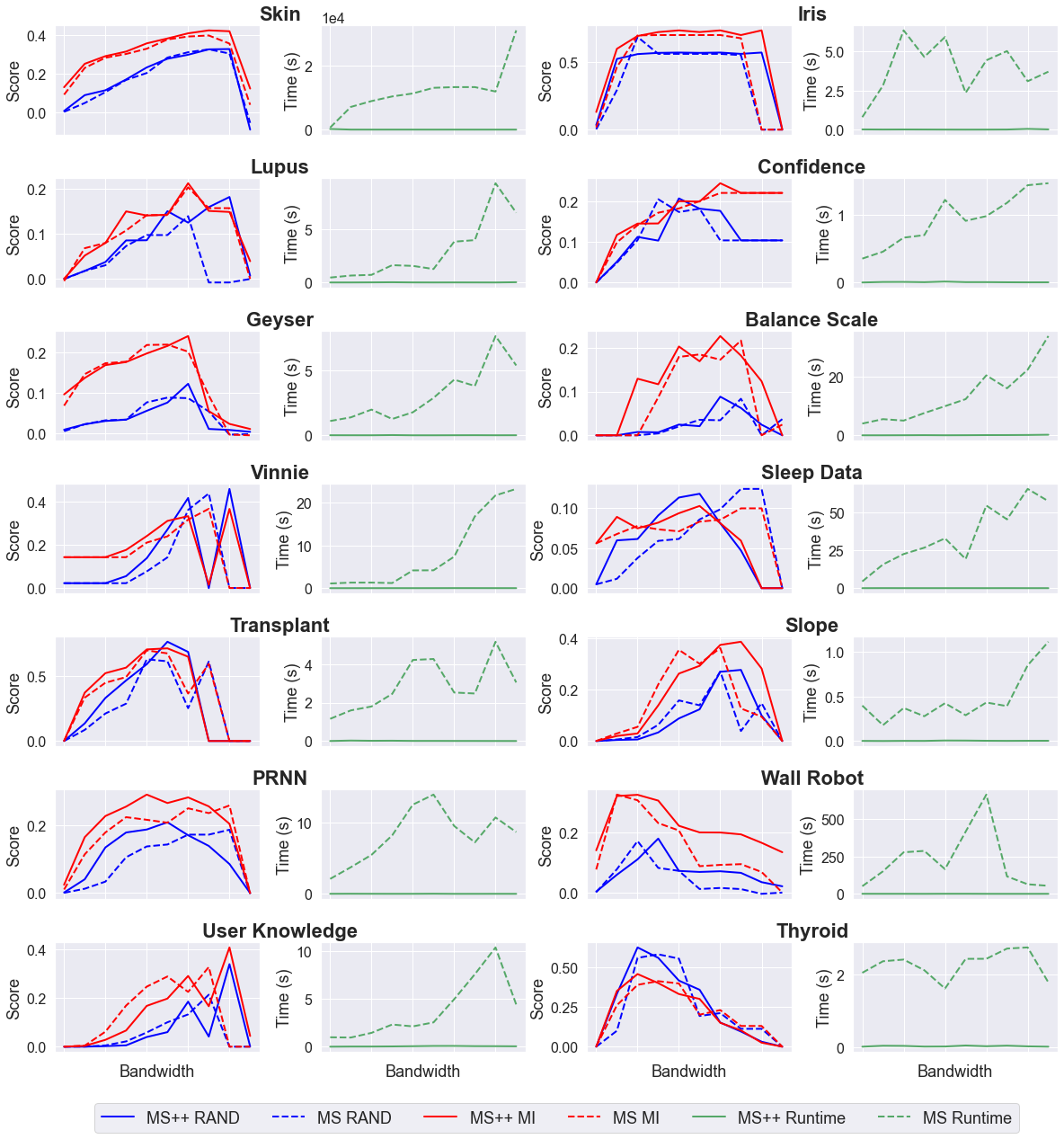}
\end{center}
   \caption{\label{fig:additional_experiments}\textit{Additional experiments.} MeanShift++ and MeanShift performances on 10 more real-world datasets, for a total of 14 small datasets. Again, we illustrate how both algorithms were tuned over an appropriate range of bandwidth, and ARI, AMI, and runtime are reported for each run. MeanShift++ consistently performs as well or better than Meanshift in a fraction of the time, besting MeanShift in 12 out of these 14 select small datasets in ARI and AMI.}
\end{figure}

\section{Proofs}

\begin{proof}[Proof of Theorem~\ref{theorem}]
We first give an upper and lower bound on $\mathcal{P}(G)$ for $G \in \mathcal{G}_h$, where $\mathcal{P}(G)$ denotes the total probability mass of $G$ w.r.t. $\mathcal{P}$.
Since $\mathcal{X}$ is bounded in $\mathbb{R}^d$, there exists a constant $C_\X$ such that for all $h > 0$, we have $|\mathcal{G}_h| \le C_\X \cdot h^{-d}$.
Next, we have by the assumptions that for any cell $G \in \mathcal{G}_h$, for some $C'$ depending on $p$:
\begin{align*}
    \mathcal{P}(G) &\ge \min_{x \in G\cap \mathcal{X}} p(x) \cdot \text{Vol}(G \cap \mathcal{X})\\
    &\ge C' (\lambda_0 - C_\alpha\cdot h^\alpha)\cdot h^d \ge \frac{1}{2} C' \lambda_0 \cdot h^d.
\end{align*}
Next, since $\mathcal{X}$ is compact, there exists $p_{\text{max}}$ such that $\sup_{x\in\X} p(x) = p_{\text{max}} < \infty$. Thus,
$\mathcal{P}(G)\le  p_{\text{max}}\cdot h^d$.

We now bound $\sup_{x \in \X} |\frac{\mathcal{P}(G(x))}{h^d} - \widehat{p}_h(x)|$. We have that the event a sample drawn according to $x$ lies in $G$ is a Bernoulli random variable of probability $\mathcal{P}(G)$. From the above, we have that this variance is upper and lower bounded by $O(h^{d})$, and let us denote the number of samples in $\X_{[n]}$ that lie in $G$ as $\mathcal{P}_n(G)$. 
Therefore, by Hoeffding-Chernoff inequality (i.e. Theorem 1.3 of \cite{phillips2012chernoff}), we have for some $C''$ depending on $p$ that
\begin{align*}
    \mathbb{P}\left(\left|\cdot \mathcal{P}(G(x)) - \cdot \mathcal{P}_n(G(x))\right| \ge \frac{t}{n}\right) \le \exp\left(-\frac{t^2}{4C''\cdot n\cdot h^d}\right),
\end{align*}
for $n \cdot h^d$ sufficiently large depending on $p$.
Now choosing $t = 2\sqrt{n\cdot h^d}\cdot \sqrt{  C''\cdot\log(C_\mathcal{X}\cdot h^{-d}/\delta)}$, we have
\begin{align*}
    \mathbb{P}\left(\left|\frac{\mathcal{P}(G(x))}{h^d} - \widehat{p}_h(x)\right| \ge \frac{2\sqrt{C''\cdot \log( C_\mathcal{X}\cdot h^{-d}/\delta)}}{\sqrt{n\cdot h^d}}\right) \le \frac{\delta}{|\mathcal{G}_h|}.
\end{align*}
Thus, by union bound, we have the following holds:
\begin{align*}
    \mathbb{P}\left(\sup_{x\in \mathcal{X}}\left|\frac{\mathcal{P}(G(x))}{h^d} - \widehat{p}_h(x)\right| \ge \frac{2\sqrt{C''\cdot\log( C_\mathcal{X}\cdot h^{-d}/\delta)}}{\sqrt{n\cdot h^d}}\right) \le \delta.
\end{align*}
Next, we have by the smoothness assumption that
\begin{align*}
    \sup_{x\in \X} \left|p(x) - \frac{\mathcal{P}(G(x))}{h^d} \right| \le C_\alpha h^\alpha.
\end{align*}
The result follows by triangle inequality.
\end{proof}

}
\end{document}